\title{\LARGE \bf
Constant Space Complexity Environment Representation for Vision-based Navigation% with Image Space Potential Fields
%Representing Object Encroachment with Image Space Potential Fields for Vision-based Navigation
}
\author{Jeffrey Kane Johnson$^{1}$% <-this % stops a space
%\thanks{*This work was not supported by any organization}% <-this % stops a space
\thanks{$^{1}$Jeffrey Kane Johnson received his PhD from Indiana University and is principal of Maeve Automation,
        Mountain View, CA 94043\newline
        \href{mailto:contact@maeveautomation.com}{\nolinkurl{contact@maeveautomation.com}}}%
}
\theoremstyle{plain}
\newtheorem{lemma}{Lemma}
\theoremstyle{definition}
\newtheorem{defn}{Definition} % definition numbers are dependent on theorem numbers
\newtheorem{problem}{Problem} % same for example numbers
\begin{document}

\maketitle
\thispagestyle{empty}
\pagestyle{empty}

%%%%%%%%%%%%%%%%%%%%%%%%%%%%%%%%%%%%%%%%%%%%%%%%%%%%%%%%%%%%%%%%%%%%%%%%%%%%%%%%
\begin{abstract}
This paper presents a preliminary conceptual investigation into an environment representation that has  constant space complexity with respect to the camera image space. This type of representation allows the planning algorithms of a mobile agent to bypass what are often complex and noisy transformations between camera image space and Euclidean space. The approach is to compute per-pixel potential values directly from processed camera data, which results in a discrete potential field that has constant space complexity with respect to the image plane. This can enable planning and control algorithms, whose complexity often depends on the size of the environment representation, to be defined with constant run-time. This type of approach can be particularly useful for platforms with strict resource constraints, such as embedded and real-time systems.
\end{abstract}

%%%%%%%%%%%%%%%%%%%%%%%%%%%%%%%%%%%%%%%%%%%%%%%%%%%%%%%%%%%%%%%%%%%%%%%%%%%%%%%%
\section{INTRODUCTION}

A significant issue in planning and control when solving real-world navigation problems is that there are often large numbers of individual agents with whom a mobile robot might interact. Consider navigating a busy roadway or crowded sidewalk or convention hall, where there may be multitudes of other agents sharing the space. Conventional approaches to planning in multi-agent systems often explicitly consider interactions between all agents and so become overwhelmed as the number of agents grows~\cite{DBLP:conf/itsc/BrechtelGD11,DBLP:journals/arobots/GalceranCEO17,DBLP:series/sbis/OliehoekA16}. More scalable conventional approaches often have strict requirements on system dynamics~\cite{DBLP:conf/isrr/BergGLM09} or observability of agent policies~\cite{DBLP:journals/ijrr/BekrisGMK12}.

This paper presents a preliminary conceptual investigation into the use of a fixed-size environment representation for vision-based navigation. The representation is modeled after a camera image space, which is chosen because cameras are a ubiquitous sensor modality, and image space is by nature discrete and fixed size. The proposed representation additionally allows planning and control routines to reason almost directly in sensor space thereby avoiding often complex and noisy transformations to and from a more conventional Euclidean space representation. The intent of this new representation is to help vision-based mobile robots navigate complex multi-agent systems efficiently, and to take a step toward satisfying the strict resource requirements often present in real-time, safety critical, and embedded systems~\cite{DBLP:conf/sigopsE/LouiseDDA02}.

The next section briefly surveys related work, then the environment representation is presented along with an illustrative example of how it can be used. Finally, conclusions and future work are discussed.

\section{RELATED WORK}
\begin{figure}
  \begin{subfigure}{\linewidth}
  \centering
  \includegraphics[width=0.95\linewidth]{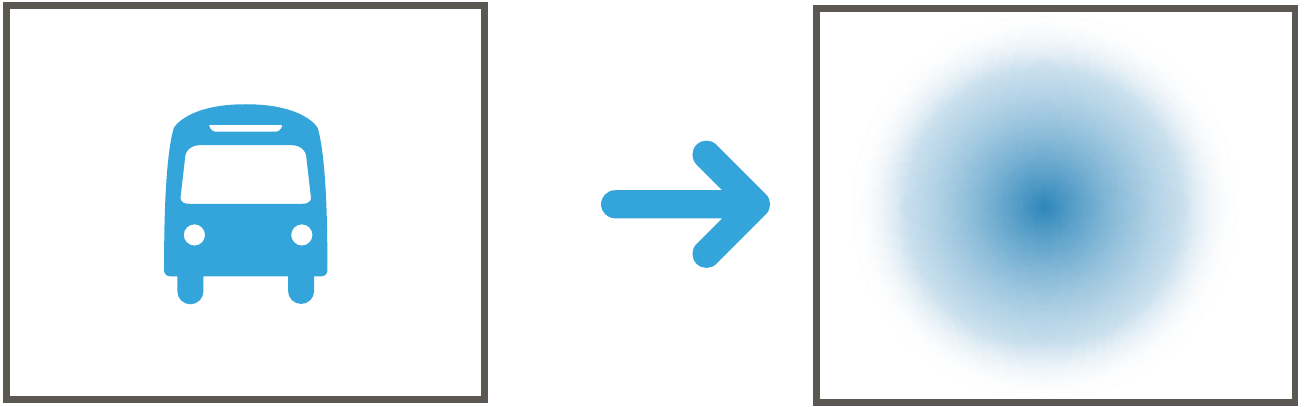}
  \end{subfigure}\par\medskip
  \begin{subfigure}{\linewidth}
  \centering
  \includegraphics[width=0.95\linewidth]{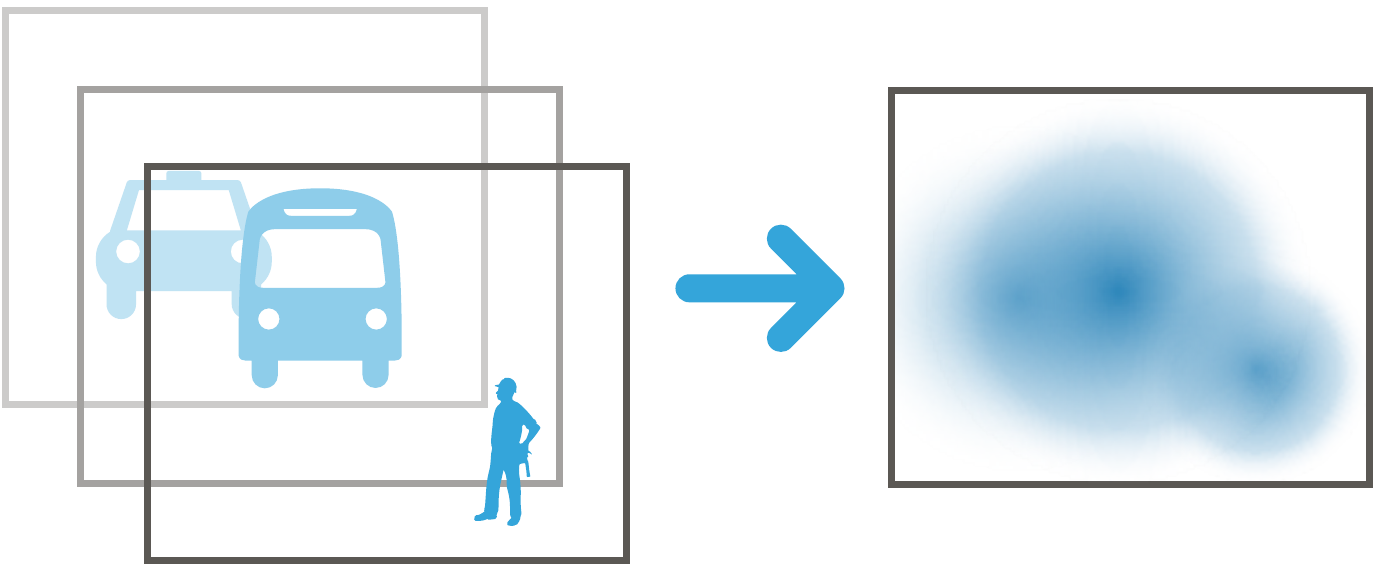}
  \end{subfigure}
\caption{Top: Illustration of an approaching object in the image plane (left), and the image space potential field (right). Bottom: Multiple object detections (left) can be composed into a single field (right). Black boxes represent ROIs.}\label{fig:image-space-fields}
\end{figure}

The approach in this work is based on {\em potential fields}~\cite{DBLP:conf/icra/Khatib85}. These fields represent attractive and repulsive forces as scalar fields over a robot's environment that, at any given point, define a force acting upon the robot that can be interpreted as a control command. This type of approach is subject to local minima and the {\em narrow corridor} problem~\cite{Koren91potentialfield}, particularly in complex, higher-dimensional spaces~\cite{DBLP:books/daglib/0016830}. Randomized approaches can partially overcome these difficulties~\cite{citeulike:1059835,DBLP:journals/ijrr/BarraquandL91}, and extensions to globally defined {\em navigation functions}~\cite{koditschek-aam-1990,1249705,DBLP:conf/icra/Koditschek87}, while often difficult to use in practice, theoretically solve them. This work uses potential fields defined over a virtual image plane, which limits the possibility of narrow corridors, and is designed such that additional information can be used by the controller to interpret potential gradients, as suggested in~\cite{DBLP:conf/icra/Masoud05}. \cite{DBLP:conf/icra/WolfB08} described a scheme related to that presented in this paper, but places potentials in a Euclidean space, which this approach explicitly avoids. The potential fields computed by the approach in this paper are intended to inform a {\em visual servoing}~\cite{Cowan:CSD-02-1200,DBLP:conf/icra/WeissSN85,DBLP:conf/icra/FomenaC07,DBLP:journals/corr/LeeLA17} control scheme. This is a class of controllers that computes control commands directly from image space data.

In order to define values for the potential fields, this approach draws on a wealth of related works in optical flow and monocular collision avoidance, notably~\cite{7321313,DBLP:conf/ecmr/AlenyaNC09,DBLP:conf/icra/ByrneT09,DBLP:conf/icra/MoriS13,DBLP:conf/icpr/CamusCHH96,DBLP:conf/isvc/PundlikPL11,DBLP:conf/iccv/CoombsHHN95,DBLP:conf/scia/Farneback03,DBLP:journals/pami/NelsonA89,NASA-TM-104025,DBLP:conf/icra/MoriS13,DBLP:conf/icra/ByrneT09}. The intuition of these approaches is that the information contained in a sequence of monocular stills provides sufficient information to compute {\em time-to-contact} (Definition~\ref{def:ttc}), which informs an agent about the rate of change of object proximity. The primary contribution of this work is a sensor-inspired representation space and algebra for enabling planning and control algorithms to reason effectively and efficiently with the output of this class of perception algorithms.

\section{IMAGE SPACE POTENTIAL FIELDS}
Before defining image space potential fields, the notion of a potential field used in this paper is defined below:

\begin{defn}\label{def:potential-field}
A {\em potential field} (also {\em artificial potential field}) is field of artificial forces that attracts toward desirable locations and repels from undesirable locations. In this work, a potential field is defined by a potential function that maps an image pixel value $I(x,y)$ to a tuple of affinely extended reals $\overline{\mathbb{R}}=\mathbb{R}\cup\{-\infty,+\infty\}$, the first of which is the potential value, and the second of which is its time derivative:
\begin{align}
I(x,y)\mapsto\overline{\mathbb{R}}^2
\end{align}
\end{defn}

From the definition, the image space potential (ISP) field is modeled after an image plane. As with image planes, the potential field is discretized into a grid, and regions of interest (ROIs) are defined for it. In this work it is assumed that the fields are origin and axis aligned with the camera images, and that they have the same ROIs (as in Figure~\ref{fig:image-space-fields}).

The potential function, which maps image pixel values to potential values, can be defined in arbitrary ways, either with geometric relations, learned relations, or even heuristic methods. In this paper geometric properties of temporal image sequences are used. The approach is to assume an estimation of the {\em time-to-contact} (defined below) is available for each pixel in an image over time. The mapping of image pixel to this value is taken as the potential function that defines the image space potential field.

\begin{defn}\label{def:ttc}
{\em Time-to-contact} ($\tau$), is the predicted duration of time remaining before an object observed by a camera will come into contact with the image plane of the camera. The time derivative of $\tau$ is written $\dot{\tau}$.
\end{defn}

As noted often in literature (e.g.~\cite{NASA-TM-104025,DBLP:conf/icpr/CamusCHH96,DBLP:conf/iccv/CoombsHHN95,DBLP:journals/pami/NelsonA89}), $\tau$ can be computed directly from the motion flow of a scene, which is defined as:

\begin{defn}
{\em Motion flow} is the pattern of motion in a scene due to relative motions between the scene and the camera. In other words, it is a vector field describing the motion of objects on the image plane over time.
\end{defn}

Unfortunately, it is typically not possible to measure motion flow directly, so it is usually estimated via {\em optical flow}, which is defined as the {\em apparent} motion flow in an image plane. Historically this has been measured by performing some kind of matching of, or minimization of differences between, pixel intensity values in subsequent image frames~\cite{DBLP:journals/ai/HornS81,DBLP:conf/scia/Farneback03,DBLP:conf/ijcai/LucasK81}, while more recently deep learning techniques have been successfully applied~\cite{DBLP:conf/iccv/WeinzaepfelRHS13}.

The image space potential field is now defined using $\tau$:

\begin{defn}\label{def:isp}
An {\em image space potential field} is defined by a potential function that maps image pixels to a tuple of scalar potential values $\langle\tau,\dot{\tau}\rangle$.
\end{defn}

\subsection{Computing $\tau$}\label{sec:computing-tau}
Assuming some reasonably accurate estimation of optical flow vector field exists, $\tau$ can be computed directly under certain assumptions~\cite{DBLP:conf/icpr/CamusCHH96}. In practice, the computation of optical flow tends to be noisy and error prone, so feature- and segmentation-based approaches can be used~\cite{DBLP:conf/icra/MoriS13,DBLP:conf/icra/ByrneT09}. The idea of these approaches is to compute $\tau$ from the rate of change in detection {\em scale}. For a point in time, let $s$ denote the scale (maximum extent) of an object in the image, and let $\dot{s}$ be its time derivative. When the observed face of the object is roughly parallel to the image plane, and under the assumption of constant velocity translational motion and zero yaw or pitch, it is straightforward to show that~\cite{camus-phdthesis}:

\begin{align}
\tau=\frac{s}{\dot{s}}\label{eq:tau-scale}
\end{align}

As shown by Lemma~\ref{lem:scale-invariance}, scale has a useful invariance property for these types of calculations that can make $\tau$ computations robust to certain types of detection noise:

\begin{lemma}\label{lem:scale-invariance}
The scale $s$ of an object on the image plane is invariant to transformations of the object under $SE(2)$ on the $XY$ plane.
\end{lemma}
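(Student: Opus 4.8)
The plan is to reduce the claim to two elementary geometric facts: that perspective projection, restricted to a plane parallel to the image plane, is a similarity transformation, and that $SE(2)$ acts on that plane by isometries. I would first fix the standard pinhole model, with the optical center at the origin and the optical axis along $Z$, so a world point $(X,Y,Z)$ with $Z>0$ maps to the image point $\pi(X,Y,Z)=(fX/Z,\,fY/Z)$ for focal length $f$. Consistent with the hypotheses used to derive \eqref{eq:tau-scale} (the observed face of the object roughly parallel to the image plane), I would model the object as a point set $\mathcal{O}$ lying in a fronto-parallel plane $Z=Z_0$, and take the scale to be the diameter of its image, $s=\operatorname{diam}\pi(\mathcal{O})=\sup_{p,q\in\mathcal{O}}\lVert\pi(p)-\pi(q)\rVert$.

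The first step is to show that $\pi$ restricted to the plane $Z=Z_0$ is a similarity with ratio $f/Z_0$: for any $p=(X_1,Y_1,Z_0)$ and $q=(X_2,Y_2,Z_0)$ one computes $\pi(p)-\pi(q)=(f/Z_0)(X_1-X_2,\,Y_1-Y_2)$, hence $\lVert\pi(p)-\pi(q)\rVert=(f/Z_0)\lVert p-q\rVert$; taking the supremum gives $s=(f/Z_0)\operatorname{diam}(\mathcal{O})$. The second step is to observe that an element $g\in SE(2)$ sends $(X,Y,Z)\mapsto(R\,(X,Y)^{\top}+t,\;Z)$ with $R\in SO(2)$ and $t\in\mathbb{R}^2$, so it fixes $Z$ — keeping $g\mathcal{O}$ in the plane $Z=Z_0$ — and acts on the $XY$ coordinates as a Euclidean isometry, which preserves all pairwise distances and therefore the diameter: $\operatorname{diam}(g\mathcal{O})=\operatorname{diam}(\mathcal{O})$. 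Combining the two steps, the scale of the transformed object is $(f/Z_0)\operatorname{diam}(g\mathcal{O})=(f/Z_0)\operatorname{diam}(\mathcal{O})=s$, which is the claim.

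I expect the only real subtlety to be the role of the fronto-parallel assumption. It is precisely what makes $\pi$ a pure similarity on the object's support — all object points share the depth $Z_0$, so lateral $SE(2)$ translations do not rescale the image and in-plane rotations do not distort it. I would state this assumption explicitly and note that it cannot simply be dropped: for a slanted face or an out-of-plane rotation the per-point magnification $f/Z$ varies over $\mathcal{O}$ and the argument breaks, so that regime is outside the lemma's scope. A secondary point worth one sentence is that \emph{maximum extent} here must be read as the set diameter, an isometry invariant; an axis-aligned bounding-box width or height would not be rotation invariant, so the diameter reading is the one under which the statement holds.
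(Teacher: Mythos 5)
Your proof is correct and follows essentially the same route as the paper's: both reduce the claim to the fact that perspective projection restricted to a fronto-parallel plane scales Euclidean distances uniformly by the inverse depth, so that $SE(2)$, acting isometrically on the $XY$ coordinates, leaves the projected extent unchanged. The only difference is packaging — the paper computes $s=\frac{1}{Z}\sqrt{\Delta X^2+\Delta Y^2}$ for the endpoints of a single line segment with unit focal length, whereas you phrase the same computation for a general point set via the diameter and make the fronto-parallel hypothesis explicit, which is a harmless (and arguably cleaner) restatement.
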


\begin{proof}
Let $(X_1,Y_1,Z)$ and $(X_2,Y_2,Z)$ be end points of a line segment on the $XY$ plane in the world space, with $XY$ parallel to the image plane and $Z$ coincident with the camera view axis. Without loss of generality, assume unit focal length. The instantaneous scale $s$ of the line segment in the image plane is given by:

\begin{align}
s=\frac{1}{Z}\sqrt{\Delta X^2+\Delta Y^2}
\end{align}

Thus, any transformation of the line segment on the $XY$ plane for which $\Delta X^2+\Delta Y^2$ is constant makes $s$, and thereby $\dot{s}$ and $\tau$, independent of the values of $(X_1,Y_1)$ and $(X_2,Y_2)$. By definition, this set of transformations is $SE(2)$.
\end{proof}

In addition, as shown in~\cite{citeulike:2798975}, the time derivative $\dot{\tau}$ of $\tau$, when available, enables a convenient decision function for whether an agent's current rate of deceleration is adequate to avoid head-on collision or not. The decision function is given below, where $\varepsilon>0$ is a buffer to prevent an agent from coming to a stop directly at the point of contact with another agent:

\begin{align}
   f(\dot{\tau},\varepsilon) = \left\{
     \begin{array}{lr}
       1 & : \dot{\tau}\ge-0.5+\varepsilon\\
       0 & : \dot{\tau}<-0.5+\varepsilon
     \end{array}
   \right.\label{eq:tau-dot-decision}
\end{align}

Equation~\ref{eq:tau-scale} allows the computation $\tau$ for whole regions of the image plane at once given a time sequence of labeled image segmentations, and Equation~\ref{eq:tau-dot-decision} enables binary decisions to be made about the safeness of the agent's current state. The following two sections describe encoding these pieces of information into image space potential fields.

\subsection{Computing Fields for a Single Object}

Computing the image space potential field for a single object is straightforward given the discussion in \S\ref{sec:computing-tau}. Assuming an object can be accurately tracked and segmented over time in the camera image frame, its scale $s$ and estimated expansion $\dot{s}$ can be used to compute $\tau$ for each pixel in the image plane that belongs to the object, and a finite differences or estimation method can be used to compute $\dot{\tau}$. Pixels that do not belong to the object, and for which no other information is available, are mapped to $\langle\infty,\infty\rangle$ by the potential function. An illustration of this mapping is shown in the top row of Figure~\ref{fig:image-space-fields}.

\subsection{Computing Fields for Arbitrary Objects}

Computing the image space potential field for arbitrary objects builds on the single object case by computing the field individually for each segmented and tracked object and then composing them into a single field. For this composition to be meaningful, however, the fields cannot be simply added together; this would result in the destruction of the $\tau$ information. Instead, a composite field is defined to preserve and combine $\tau$ information meaningfully. Equation~\ref{eq:composed-field} defines a composite field $F$ in terms of image space potential fields $F_1$ and $F_2$ for an image $I$, and where $\min_\tau$ selects the tuple whose $\tau$ value is minimum:

\begin{align}
F(x,y)=\left\{\min_\tau\left(F_1(x,y),F_2(x,y)\right)\;|\;(x,y)\in I\right\}\label{eq:composed-field}
\end{align}

Selecting the point-wise $\tau$-minimum tuple for the composite field effectively enforces instantaneous temporal ordinality of objects, i.e., objects that are instantaneously temporally nearer are always what is seen. It is important to note that this is {\bf not} the same as spatial ordinality. For an illustration of this, see Figure~\ref{fig:ordinality}.

\subsection{Constant Space Complexity}

By definition the image space potential field representation has guaranteed constant space complexity assuming that the camera images for which the fields are generated are fixed size. This can be a powerful tool in simplifying planning and control algorithms whose complexity is typically dependent on the number of objects in a scene. In many cases it may, in fact, be possible to achieve constant time for planning and control given this representation.

It is important to note, however, that computing the representation itself may have arbitrary complexity: the problem of segmenting and tracking objects in order to generate these potential fields, for instance, can be efficient or arbitrarily complex, depending on the approach. The problem of investigating efficient computation methods for these fields is a point of future work discussed in \S\ref{sec:conclusion}. 

\section{NAVIGATION WITH IMAGE SPACE POTENTIAL FIELDS}
\begin{figure}
  \begin{subfigure}{\linewidth}
  \centering
  \includegraphics[width=0.95\linewidth]{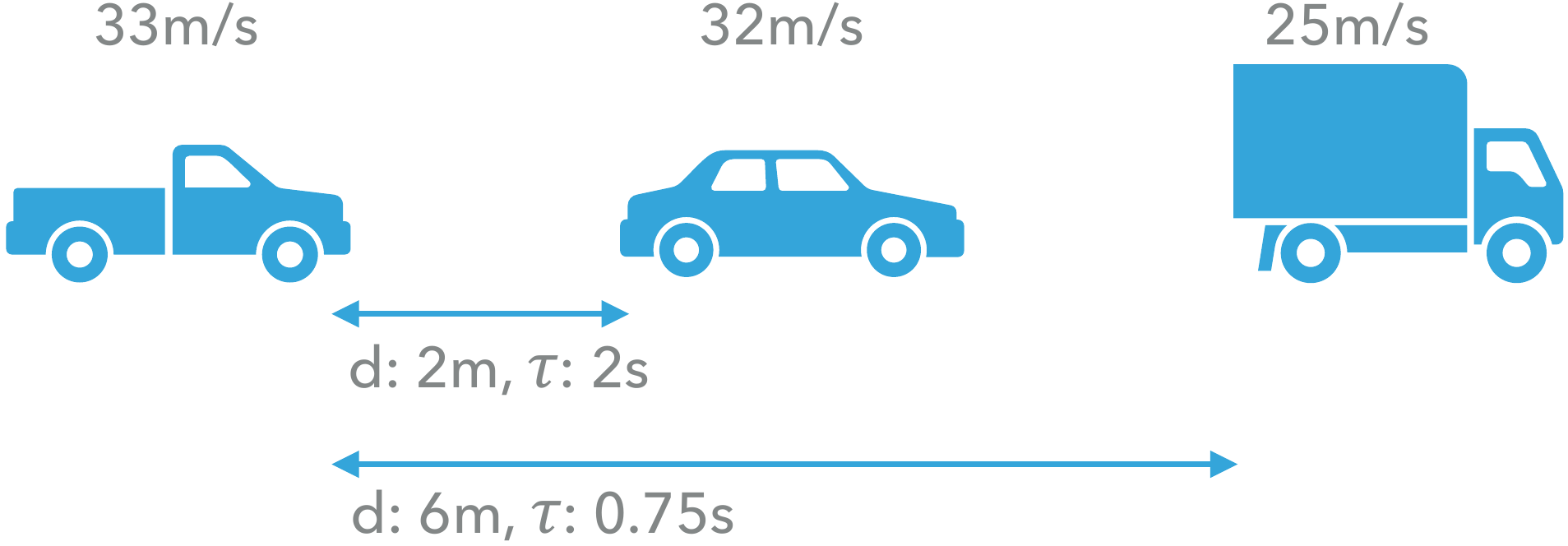}
  \end{subfigure}
\caption{Illustration comparing spatial and temporal ordinality. Consider three vehicles traveling in the same lane. For the pickup truck (left), the car (middle) has lowest spatial ordinality, i.e., is closest spatially. However, the van (right) has lowest temporal ordinality, i.e., it is nearest temporally.}\label{fig:ordinality}
\end{figure}

The intuition behind using image space potential functions for vision-based navigation is that they provide a natural space in which to compute collision avoiding controls, which then allows general navigation to be solved using a guided collision avoidance scheme, such as ORCA~\cite{DBLP:conf/isrr/BergGLM09} or the Selective Determinism framework%
%~\cite{Johnson-MRS-2017}
~\cite{jkjohnson-phdthesis}. This section uses the Selective Determinism framework to define a simple control function that utilizes image space potential fields to navigate toward a visible goal in the presence of other agents. In this example only forward field of view is considered, but the extension to omnidirectional field of view is straightforward. The navigation problem considered is defined below:

\begin{problem}\label{prob:example-problem}
Assume a set of acceleration-bounded agents $\mathcal{A}$, each operating according to known dynamics and with similar capabilities, navigating a shared space. Each agent operates according to a unique reward function that is not observable to other agents. Each agent is equipped with a camera that always faces in the direction of motion, and each agent is capable of performing image segmentation on the output. Suppose the reward function for an agent $A$ encourages navigating toward a goal that $A$ has in line of sight. How can $A$ control toward the goal while avoiding collision with other agents?
\end{problem}

Problem~\ref{prob:example-problem} is the type of problem that a driver may face on a busy highway when trying to navigate toward an exit or offramp. The solution in this example will take a na\"{i}ve approach of decoupled steering and acceleration control while noting that more sophisticated control schemes are certainly possible. And while the example is formulated for a mobile agent traveling on a two dimensional manifold, the technique in general is equally applicable to three dimensions (such as with a UAV). The method for computing collision avoiding controls is discussed first, followed by the formulation of the navigation law.

\subsection{Collision Avoidance}

In order to address collision avoidance, the {\em Encroachment Detection} problem is presented.

\begin{problem}\label{prob:encroachment-detection}
Let {\em encroachment} refer to the reduction in minimum proximity between two or more objects in a workspace $\mathcal{W}$ beyond desired limits as measured by a metric $\mu(\cdot,\cdot)$. Assume an agent $A$ receives some observation input $O_t$ of $\mathcal{W}$ over time. Let $\mathcal{A}$ be the set of agents that does not include $A$. For a sequence of observations $O_i,\ldots,O_t$, how can $A$ estimate the rate of change of $\min_{A_j\in\mathcal{A}}\mu(A,A_j)$?
\end{problem}

Note that maintaining an estimate of $\langle\tau,\dot{\tau}\rangle$ directly solves the problem, as these values quantify temporal proximity and the rate of encroachment. The collision avoidance problem can now be solved by detecting encroachment and controlling such that it does not violate limits.

It was shown in \cite{Johnson-RSS-16} that collision avoidance can be guaranteed in a non-adversarial system if all agents compute and maintain collision-free stopping paths, which are contingency trajectories that bring an agent to zero relative velocity in minimal time. If agents are also self-preserving, they can each assume that all other agents will maintain such contingencies. Under these assumptions, agents should have sufficient information in the image space potential field to compute a solution to Problem~\ref{prob:encroachment-detection} by maintaining non-zero time headway, which is assumed to be witness to the existence of a feasible stopping path.

For illustration, a na\"{i}ve control set computation in the spirit of the braking controller in~\cite{Johnson_semiautonomouslongitudinal,6338731} is sketched in Algorithm~\ref{alg:virtual-bumpers}.
%The sketch is intended as pseudo-code, with implementation details omitted for clarity.
This routine makes the reasonable assumption that $\dot{\tau}$ is not so large as to overwhelm $\tau$. The idea is that steering angle and acceleration commands are computed independently and such that $\tau$ thresholds are not violated. To compute steering angles, a min filter is swept across the field of view in the potential field and a minimum potential value within the window is computed for each column in the image. Any value that meets $\tau$ thresholds is kept, and these are considered the safe steering angles (Figure~\ref{fig:control}, left). To compute the acceleration command, the minimum potential value within a centered window of a specified width is considered (Figure~\ref{fig:control}, right). If the value meets the $\tau$ threshold, the full scaled range of accelerations, $[-1,1]$, is considered safe. If the threshold is violated, then either full deceleration $[-1,-1]$ or the range of deceleration values $[-1,0)$ is sent depending on the value of the decision function of Equation~\ref{eq:tau-dot-decision}. The control sets are then used by the Selective Determinism framework to compute the output control command.

\begin{algorithm}
\caption{Given an image space potential field $F$, compute the set of steering and acceleration commands that satisfy $\tau\ge T_s$ and $\dot{\tau}\ge-0.5+\varepsilon$, where $T_s>0$ is some desired time headway, $w_\theta$ and $w_a$ are kernel widths for computing steering angle and acceleration maps, and $\varepsilon>0$ is the buffer from Equation~\ref{eq:tau-dot-decision}.}
\label{alg:virtual-bumpers}
\begin{algorithmic}[1]
\Procedure{SafeControls}{$F,T_s,\dot{\tau}_E,w_\theta,w_a,\varepsilon$}
\State Let $I_c$ be the list of image column indices
\State Let $M_a$ map $i\in I_c$ to steering angles
\State Let $h$ be the height (row count) of $F$
\State Let $M_\tau$ map $\langle\tau,\dot{\tau}\rangle$ to $i\in I_c$ via $w_\theta\times h$ min filter\label{line:theta-window}
\State Let $M_\theta=\{\langle\tau,\dot{\tau}\rangle\in M_\tau\;:\;\tau\ge T_s\}$%\gets\emptyset$ be a container for safe steering angles
%\For{$\langle\tau,\dot{\tau}\rangle_i\in M_\tau$}
%\If{$\tau_i\ge T_s$}% and $\dot{\tau}\ge\dot{\tau}_\textrm{min}$}
%\State $M_\theta[i]\gets M_a[i]$
%\EndIf
%\EndFor
\State Let $W$ be a centered $w_a\times h$ window in $F$\label{line:a-window}
\State Let $\langle\tau,\dot{\tau}\rangle_\textrm{min}$ be the min. $\tau$ over $W$
%\State Let $\Delta\tau=\tau_\textrm{min}-T_s$
\State Let $L\gets\emptyset$ be a container for safe accelerations
\If{$M_\theta=\emptyset$}
\State $M_\theta\gets0\;,\;L\gets[-1,-1]$
\ElsIf{$\tau_\textrm{min}>T_s$}
%\If{$\dot{\tau}_\textrm{min}>\dot{\tau}_E$}
\State $L\gets[-1,1]$
%\Else
%\State $L\gets[-1,0)$
%\EndIf
\Else
%\State $L\gets[-1,f(\dot{\tau},\varepsilon)]$
\If{$f(\dot{\tau},\varepsilon)=0$}
\State $L\gets[-1,-1]$
\Else
\State $L\gets[-1,0)$
\EndIf
\EndIf
\State\Return$M_\theta$, $L$
\EndProcedure
\end{algorithmic}
\end{algorithm}

\subsection{The Selective Determinism Framework}
\begin{figure}
  \begin{subfigure}{\linewidth}
  \centering
  \includegraphics[width=0.95\linewidth]{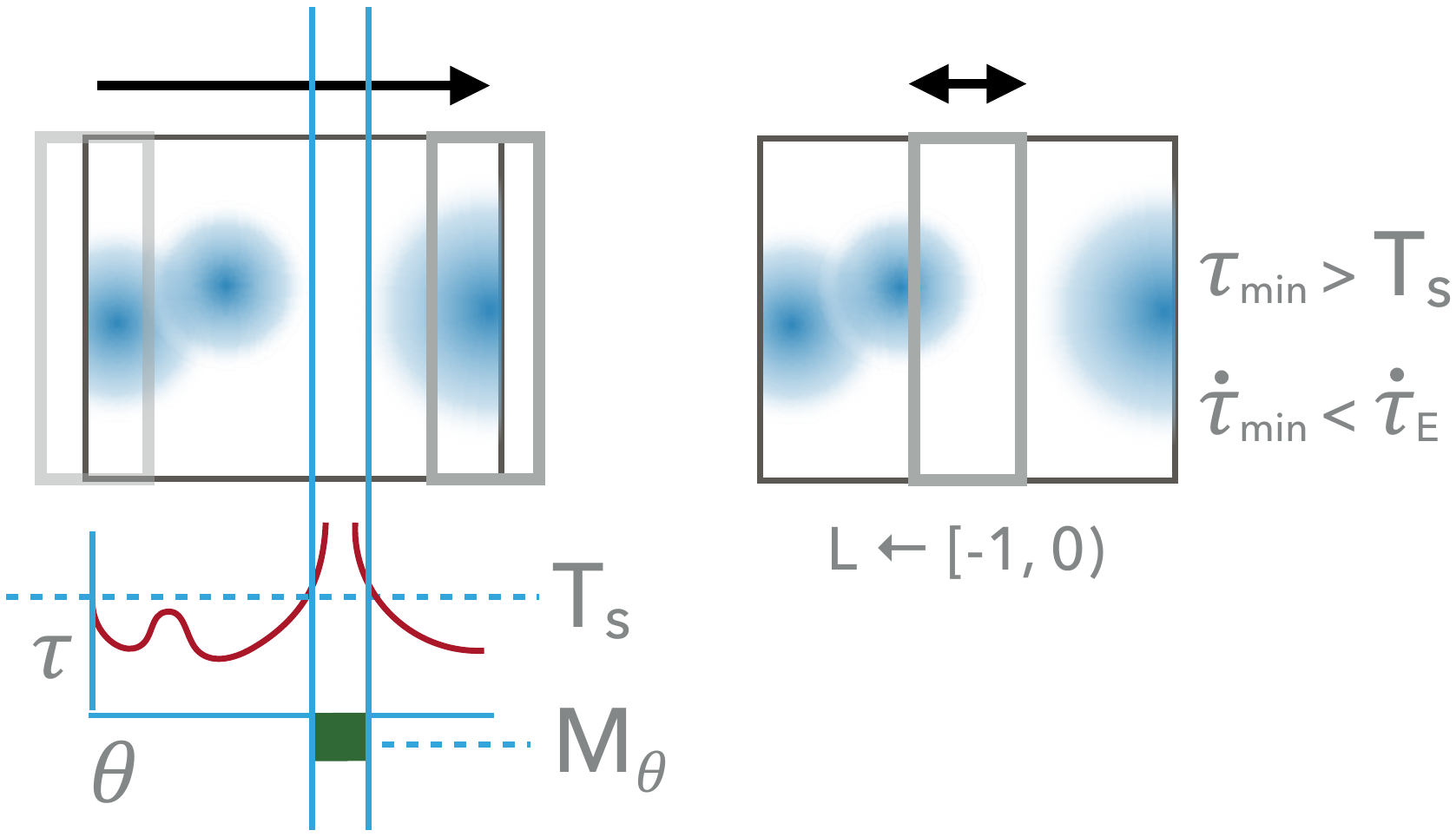}
  \end{subfigure}
\caption{Illustration of the steering angle control computation (left) and the acceleration control computation (right). On the left, a window sweeps from left to right over the image space potential field computing minimum $\tau$ for each image space column (left bottom). The set of column values that satisfy the threshold are the set of acceptable steering angles $M_\theta$. On the right, the minimum potential value over a centered window is computed and the set $L$ of acceptable scaled acceleration values are determined from it.}\label{fig:control}
\end{figure}

\begin{figure}
  \begin{subfigure}{\linewidth}
  \centering
  \includegraphics[width=0.95\linewidth]{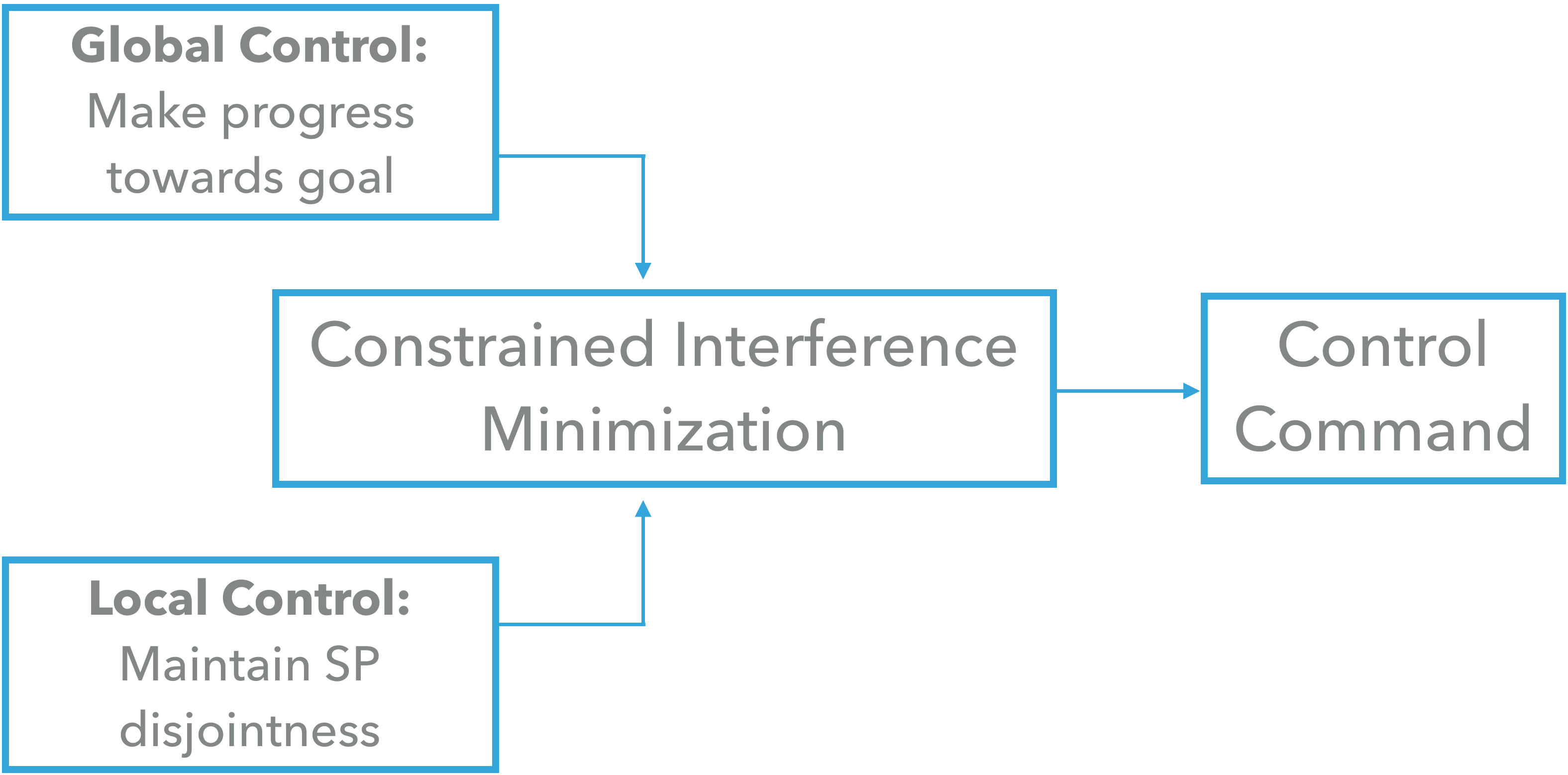}
  \end{subfigure}
\caption{Architecture of the Selective Determinism framework.}\label{fig:sd}
\end{figure}

Selective Determinism~\cite{jkjohnson-phdthesis} is a solution framework for dynamically-constrained, non-adversarial, partially-observable multi-agent navigation problems. It belongs to a family of approaches useful for dealing with real-world problems because they remove the theoretical intractability inherent in optimal approaches~\cite{Weinstein:2013:OPL:2891460.2891661,YuNgNOMDP} while typically exhibiting good empirical performance. Selective Determinism, in addition, can also make certain collision avoidance guarantees even without explicitly considering interaction effects among agents.

Selective Determinism works by exploiting the idea that agents in a system are capable of independently computing contingency trajectories that cover the space necessary for them to come to a stop (or to a zero relative velocity), and it assumes that agents do so, and that they will seek to maintain a non-empty set of available contingencies.

The framework casts the navigation problem in terms of a constrained interference minimization~\cite{Johnson_semiautonomouslongitudinal,6338731} that utilizes a local collision avoidance controller to compute sets of controls from which an optimization choses a control that makes maximal progress toward some goal (see Figure~\ref{fig:sd}). The solution to Problem~\ref{prob:example-problem} is sketched in Algorithm~\ref{alg:virtual-bumpers-sd}.

\begin{algorithm}
\caption{For a desired pixel location $(x_d,y_d)$, and setpoint speed $\dot{s}_d$, compute the Selective Determinism control that safely guides the agent $A$ toward $(x_d,y_d)$. See Algorithm~\ref{alg:virtual-bumpers} for descriptions of the other parameters.}
\label{alg:virtual-bumpers-sd}
\begin{algorithmic}[1]
\Procedure{Controls}{$(x_d,y_d),F,T_s,\dot{\tau}_E,w_\theta,w_a,\varepsilon$}
\State Let $\theta_t,\dot{s}_t$ be the steering angle and speed of $A$
\State Let $\theta_d$ be the steering angle corresponding to $y_d$
\State Let $M_\theta,L\gets\textrm{SafeControls}(F,T_s,\dot{\tau}_E,w_\theta,w_a,\varepsilon)$\label{line:safe-control}
\State Let $\theta^\star\gets\theta_t$ contain the new steering angle
\For{$\theta\in M_\theta$}
\If{$|\theta-\theta_d|<|\theta^\star-\theta_d|$}
\State $\theta^\star\gets\theta$
\EndIf
\EndFor
\State Let $\ddot{s}^\star\in L$ be chosen proportionally to $\dot{s}_d-\dot{s}_t$\label{line:comp}
\State\Return $\theta^\star,\;\ddot{s}^\star$
\EndProcedure
\end{algorithmic}
\end{algorithm}

\subsection{Complexity Analysis}

In Algorithm~\ref{alg:virtual-bumpers} all non-trivial operations are iterations over the width of the image plane, which is assumed to be fixed for a given problem. The operations on lines~\ref{line:theta-window} \&~\ref{line:a-window} depend on the user defined $w_\theta$ and $w_a$ parameters, but these are also bounded by image width. In Algorithm~\ref{alg:virtual-bumpers-sd}, Line~\ref{line:safe-control} is a call to Algorithm~\ref{alg:virtual-bumpers}, and so has constant complexity with respect to the image space, and Line~\ref{line:comp} is assumed to be implemented with an $O(C)$ proportional law. Thus, the navigation algorithm as a whole has constant complexity, in space and time, with respect to the camera image space.

\section{CONCLUSION \& FUTURE WORK}\label{sec:conclusion}

This paper presented a conceptual investigation into an environment representation for vision-based navigation that has constant space complexity with respect to the image. This preliminary work is intended to serve as a basis for future investigations. This section outlines three primary topics of investigation.

The first topic is how to more completely combine environment information with the potential fields. As presented here, the representation is defined strictly in terms of object $\tau$ values, but a more elegant solution would build richer information about the environment into the potential field itself. An obvious extension would be to encode path information, such as lane or sidewalk boundaries, as well as goal information, into the potential field. This would enable navigation in semantically sophisticated environments and is an area of active development\footnote{\href{https://bitbucket.org/maeveautomation/maeve\_automation\_core/src}{https://bitbucket.org/maeveautomation/maeve\_automation\_core/src}}.

The second topic is a more sophisticated control law. The decoupled approach used here can lead to odd and counter-productive behavior, such as swerving out of the way of approaching objects while at the same time slowing down, or instability around control modes. A more intelligent control law would address stability issues and reason about the steering and longitudinal controls simultaneously. Additionally, allowing the potential fields to label a small, fixed-size set of objects individually could let such a control law reason about individual interactions without losing constant space complexity or information about all other objects.

Finally, the third topic, and one of great importance, is whether and how the potential fields themselves can be computed with some kind of constant complexity. A purely optical flow based approach would address this, but would require breakthroughs in the quality and efficiency of optical flow algorithms. Alternatively, a purely learning-based approach in conjunction with cheap, heuristic-based tracking approaches may provide the requisite segmentation and tracking information without runaway complexity.

\addtolength{\textheight}{-2cm}   % This command serves to balance the column lengths
                                  % on the last page of the document manually. It shortens
                                  % the textheight of the last page by a suitable amount.
                                  % This command does not take effect until the next page
                                  % so it should come on the page before the last. Make
                                  % sure that you do not shorten the textheight too much.

%%%%%%%%%%%%%%%%%%%%%%%%%%%%%%%%%%%%%%%%%%%%%%%%%%%%%%%%%%%%%%%%%%%%%%%%%%%%%%%%

%%%%%%%%%%%%%%%%%%%%%%%%%%%%%%%%%%%%%%%%%%%%%%%%%%%%%%%%%%%%%%%%%%%%%%%%%%%%%%%%

\bibliographystyle{IEEEtran}
\bibliography{IEEEabrv,refs}

\end{document}